\documentclass[letterpaper, 10 pt, conference]{ieeeconf} 

\IEEEoverridecommandlockouts                          
\overrideIEEEmargins

\usepackage{amsthm}

\usepackage{amsmath} 
\usepackage{amssymb}  
\usepackage{tikz}
\usetikzlibrary{shapes,arrows}
\usepackage{graphicx} 
\usepackage{color} 
\usepackage{transparent}
\usepackage{cite}

\tikzstyle{block} = [draw, rectangle, minimum height=2em, minimum width=3em]
\tikzstyle{sum} = [draw, circle, node distance=1.5cm]
\tikzstyle{connection} = [draw, fill=black!100, circle, minimum size=3pt,inner sep=0, node distance=1cm]
\tikzstyle{input} = [coordinate]
\tikzstyle{output} = [coordinate]
\tikzstyle{agent} = [draw, thick, fill=blue!0, circle, minimum size=1em]

\pgfdeclarelayer{background}
\pgfdeclarelayer{foreground}
\pgfsetlayers{background,main,foreground}

\newcommand{\timeshift}{\mathrm{q}}

\theoremstyle{plain}
\newtheorem{theorem}{Theorem}

\newtheorem{definition}{Definition}

\newtheorem{remark}{Remark}

\title{\LARGE \bf
Distributed Iterative Learning Control for a Team of Quadrotors
}

\author{Andreas Hock and Angela P. Schoellig
\thanks{The authors are with the Dynamic Systems Lab (www.dynsyslab.org) at the University of Toronto Institute for Aerospace Studies (UTIAS), Canada. Email: andreas.hock@robotics.utias.utoronto.ca, schoellig@utias.utoronto.ca}
\thanks{This research was supported in part by NSERC grant RGPIN-2014-04634, the Connaught New Researcher Award and the Baden-W\"urttemberg-STIPENDIUM.}
}

\begin{document}

\maketitle
\thispagestyle{empty}
\pagestyle{empty}

\begin{abstract}

The goal of this work is to enable a team of quadrotors to learn how to accurately track a desired trajectory while holding a given formation. We solve this problem in a distributed manner, where each vehicle has only access to the information of its neighbors. The desired trajectory is only available to one (or few) vehicles. We present a distributed iterative learning control (ILC) approach where each vehicle learns from the experience of its own and its neighbors' previous task repetitions, and adapts its feedforward input to improve performance. Existing algorithms are extended in theory to make them more applicable to real-world experiments. In particular, we prove stability for any causal learning function with gains chosen according to a simple scalar condition. Previous proofs were restricted to a specific learning function that only depends on the tracking error derivative (D-type ILC). Our extension provides more degrees of freedom in the ILC design and, as a result, better performance can be achieved. We also show that stability is not affected by a linear dynamic coupling between neighbors. This allows us to use an additional consensus feedback controller to compensate for non-repetitive disturbances. Experiments with two quadrotors attest the effectiveness of the proposed distributed multi-agent ILC approach. This is the first work to show distributed ILC in experiment.

\end{abstract}

\section{INTRODUCTION}
Multi-agent systems (MAS) and machine learning are two exciting trends in robotics. In the past decades, theoretic contributions on MAS have come from fields such as biology, computer science, and control theory. One problem of particular interest is \textit{consensus}, which is concerned with all agents agreeing on some quantity of interest by only communicating with their neighbors. Many other problems can be transformed into a consensus problem; examples include flocking, rendezvous, or formation control \cite{WeiRen2005}. Consensus can be achieved without a central control unit through the design of appropriate distributed algorithms. Machine learning, on the other hand, aims to enhance the capabilities of autonomous systems by enabling them to adapt to unknown situations, autonomously correct for modeling errors, and improve their performance without human instructions. As the number of autonomous systems increases in all areas, including industrial and service robots, commercial drones, and self-driving cars, the question that arises is how their cooperation can be improved and, therefore, how MAS and machine learning can be combined.

In this paper, we focus on Iterative Learning Control (ILC) approaches, where a system learns to track a desired trajectory by repetitively executing the same task. Based on the tracking error from previous trials, the feedforward input is adapted to gradually improve performance. As such, ILC is able to compensate for repetitive disturbances. Initially developed in 1984 \cite{Arimoto1984}, it has since been studied widely in theory and experiments, cf. \cite{Bristow2006b}. In \cite{Schoellig2012a}, ILC was used to improve the trajectory tracking of a single quadrotor. 

Distributed ILC achieves formation control of MAS where only a subset of agents has direct access to the reference trajectory and only neighboring agents can communicate, see Figure \ref{fig:QuadGraph}. The goal is to follow the reference with all agents holding a predefined formation. To achieve this, every agent updates its input trajectory based on the information about its own and its neighbors' performance during the last trial. The idea of distributed ILC was introduced in 2009 \cite{Ahn2009a}, where stability was proven for a D-type input update rule; that is, the input for the next iteration is computed based on the previous input and the derivative of the tracking error. Furthermore, this first paper assumes communication graphs in the form of directed spanning trees. The heterogeneous agent dynamics are described in continuous time and are assumed to be nonlinear with relative degree one. In \cite{Yang2012c}, the proof was extended to arbitrary directed graphs and the stability criterion was simplified for the case of homogeneous agents. Even time-varying dynamics can be handled with the approach in \cite{Yang2012c}. To extend the algorithm to systems of higher relative degree, higher derivatives were used in the input update rule, see \cite{Meng2012d}. This approach is, however, restricted to linear agent dynamics, but holds for weighted and directed communication graphs.
\begin{figure} 
	\centering 
	\def\svgwidth{200pt} 
	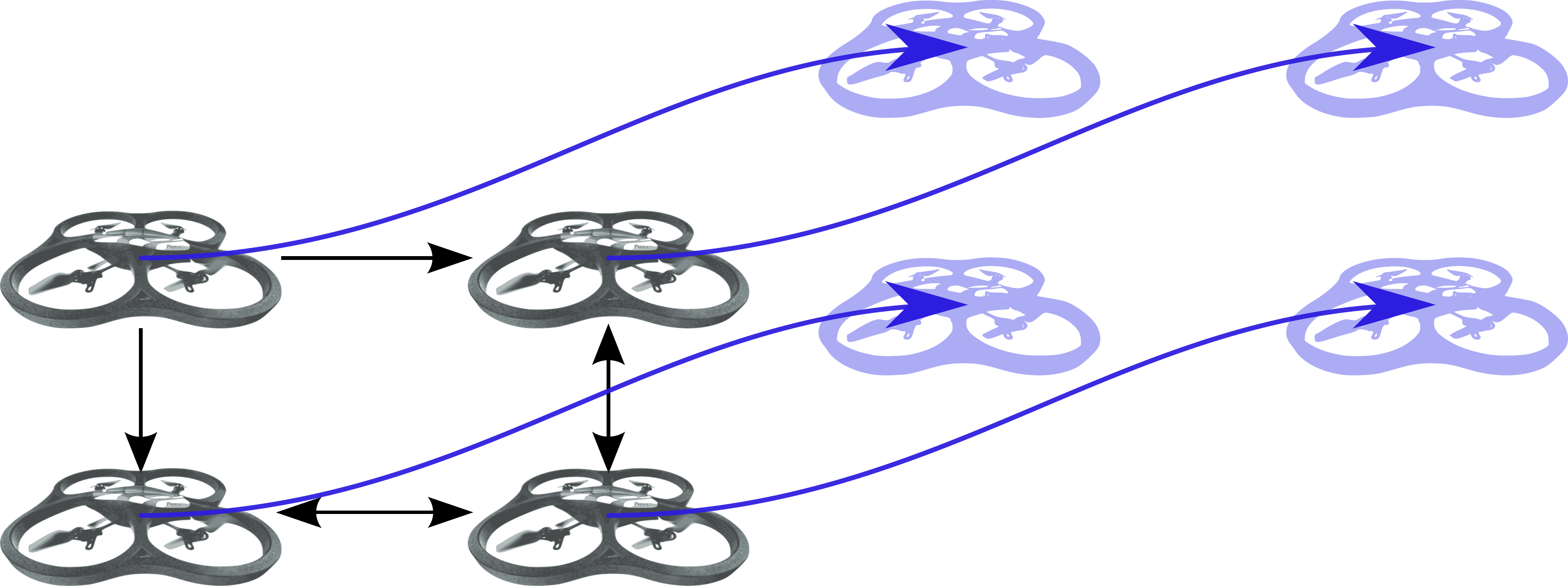
	\caption{A team of quadrotors learns to accurately track a desired trajectory while holding a given formation. The black arrows between the vehicles~$v_i$ represent the communication paths. The blue arrows depict the vehicle trajectories $x_i(t)$.}
	\label{fig:QuadGraph}
\end{figure}
All mentioned papers (\cite{Ahn2009a, Yang2012c, Meng2012d}) are using the same D-type input update rule. The shortcoming of this control strategy is a poor convergence behavior; in particular, a position offset cannot be compensated for as D-type ILC only adapts the input based on errors in the velocity.

In this paper, we prove that for linear agents any causal learning function,
with gains chosen according to a simple scalar condition, is stable. This allows
more options in the choice of the input-update rule (beyond the typical D-type update rule) and thus faster convergence can be achieved. Moreover, a constant error in the position can now be compensated for by incorporating position information into the learning function.
 
Several other multi-agent ILC approaches and extensions are found in the literature including approaches for switching graph structures \cite{Liu2012a}, and adaptive \cite{Li2014b}, robust \cite{Meng2015a}, or \mbox{optimal \cite{Yang2014a}} update laws. Nevertheless, all of these approaches have only been implemented in simulation and, to the authors' best knowledge, multi-agent ILC has not been tested on a real system before.

This paper demonstrates the proposed generalized multi-agent ILC algorithm in experiment on quadrotor vehicles. To increase robustness against non-repetitive disturbances occurring in real-world experiments, we include a distributed feedback controller, which improves the performance during single iterations. Furthermore, we show that neither the distributed feedback controller nor any other linear dynamic coupling between neighboring agents affects stability of the ILC algorithm.

The paper is structured as follows: First, we introduce some definitions and basic terminology from graph theory. In Section~III, formation tracking is formulated as a consensus problem. Then, we derive the ILC stability conditions for an arbitrary linear causal learning function in Section~IV. Based on these results, we show that a dynamic coupling between neighboring agents does not influence stability. In Section~VI, we apply the theory to a team of two quadrotors and show corresponding experimental results. Finally, conclusions are provided in Section~VII.

\section{PRELIMINARIES ON GRAPH THEORY}
To describe the information exchange between agents, graph theory is commonly used. The vehicles are the nodes of the graph and the edges represent the information flow between vehicles, see Figure \ref{fig:QuadGraph}. In the following, we provide some useful definitions and notation. 

Let $\mathcal{G}=(\mathcal{V},\mathcal{E},\mathcal{A})$ denote a directed graph with a set of vertices $\mathcal{V}(\mathcal{G})=\{v_i:i\in \{1,2,...,N\}\}$ and the edge set $\mathcal{E}(\mathcal{G})\subseteq \{(v_i,v_j): v_i,v_j \in \mathcal{V}(\mathcal{G})\}$. The edge $(v_i,v_j)$ means that agent $v_i$ receives information from $v_j$, in which case $v_i$ is called the child and $v_j$ the parent vertex. We define $\mathcal{A}= (a_{ij})\in \mathbb{R}^{N\times N}$ as the adjacency matrix of $\mathcal {G}$ with elements representing the information exchange between any two agents; that is, $a_{ij}=1$ if $(v_i,v_j)\in \mathcal{E(G)}$ and $a_{ij}=0$ otherwise.
The in-degree of node $v_i$ is defined as $d_i^\mathrm{in} = \sum_{j=1}^N a_{ij}$; that is, it describes the number of edges entering a node. The (in-degree) Laplacian matrix is defined as $\mathcal{L}_\mathcal{G}=\mathcal{D}-\mathcal{A}$, where $\mathcal{D} = diag(d_1^\mathrm{in},d_2^\mathrm{in},...,d_N^\mathrm{in})$. If there exists a special vertex, the root, with no parents ($d^\mathrm{in}=0$) which all other nodes can be connected to through directed paths and every other vertex has exactly one parent, then the graph is called a spanning tree, see Figure~\ref{fig:SpanningTree}. Note that some authors also use the term 'rooted out-branching' to distinguish between directed and undirected graphs. For further information about graph theory, see \cite{Mesbahi2010}. 

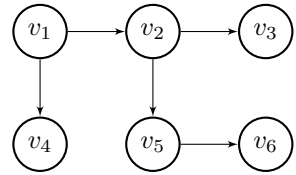
\begin{figure} 
	\vspace{2mm}
	\centering 
	\begin{tikzpicture}[auto, node distance=1.5cm,>=latex']
	\node [agent, name=v1] {$v_1$};
	\node [agent,right of=v1, name=v2] {$v_2$};
	\node [agent,right of=v2, name=v3] {$v_3$};
	\node [agent,below of=v1, name=v4] {$v_4$};
	\node [agent,below of=v2, name=v5] {$v_5$};
	\node [agent,right of=v5, name=v6] {$v_6$};
	\draw [->] (v1) -- (v2);
	\draw [->] (v1) -- (v4);
	\draw [->] (v2) -- (v5);
	\draw [->] (v2) -- (v3);
	\draw [->] (v5) -- (v6);
	\end{tikzpicture}
	\caption{Spanning tree structure of a directed graph. Node~$v_1$ has no incoming edges and is called the root node. All other vertices have exactly one parent (one incoming edge) and can be connected to the root through directed paths.}
	\label{fig:SpanningTree}
\end{figure}

\section{PROBLEM FORMULATION}
 We consider a group of $N$ homogeneous agents with fixed interaction topology defined by $\mathcal{G}=(\mathcal{V},\mathcal{E},\mathcal{A})$.

\subsection{Agent Dynamics}
The linear discrete-time single-input, single-output (SISO) dynamics of the \textit{i}th agent at the $k$th iteration of the task, $k\in \{1,2,\dots\}$, and for sampling times $t\in \{0,1,\dots,T\}$ are described by
 \begin{align}
 x_{i,k}(t+1) &= Ax_{i,k}(t)+Bu_{i,k}(t)\nonumber \\
 y_{i,k}(t) &=Cx_{i,k}(t) \label{statespace},
 \end{align}
 where $x_{i,k}(t)\in \mathbb{R}^n$ is the state vector, $u_{i,k}(t)\in \mathbb{R}$ the input, and $y_{i,k}(t)\in \mathbb{R}$ the output. Accordingly, ${A \in \mathbb{R}^{n\times n}}$, ${B \in \mathbb{R}^{n}}, C \in \mathbb{R}^{1\times n}$.\\
 Using the time-shift operator $\timeshift^{-1}$, defined by ${\timeshift^{-1} x(t)=x(t-1)}$, the state-space system \eqref{statespace} can be represented as 
 \begin{align}
  y_{i,k}(t) &=P(\timeshift)u_{i,k}(t) + d_i(t) \label{LTI-plant},
 \end{align}
 where $d_i(t)=C A^t x_i(0)$ is the free response to the initial condition, which is assumed to be constant over iterations. The input-output mapping $P(\timeshift)$ is given by an infinite power series,
 \begin{align}
  P(\timeshift)=p_1\timeshift^{-1}+p_2\timeshift^{-2}+p_3\timeshift^{-3}+\dots\,, \label{P(q)}
 \end{align}
 with Markov parameters $p_m = CA^{m-1}B$. The relative degree $r$ of system \eqref{statespace} is defined by the first non-zero coefficient; that is, $p_m=0$ for $m<r$, $p_m\neq0$ for $m=r$.
 \subsection{Reference Trajectory} 
 A reference trajectory $y_{des}(t)$ is given, which a certain subset of agents has access to. The goal is to track this reference signal with all agents simultaneously. The result can be directly applied to formation control by simply defining fixed or time-varying relative distances between the agents.
 We model the reference as an additional node $v_0$, the virtual leader. This results in the extended graph $\mathcal{G}^*$. Let $b_i\in \{0,1\}$ be defined for every agent analogously to the entries of the adjacency matrix: $b_i=1$ if agent \textit{i} has access to the reference trajectory, and $b_i =0$ otherwise. The corresponding Laplacian $\mathcal{L_{G^*}}$, in the following simply denoted by $\mathcal{L}^*$, is
 \begin{equation}
 \mathcal{L^*}=
 \begin{bmatrix}
 0& 0_{1\times N}\\
 -\mathrm{b} & \mathcal{L_G}+\mathcal{B}
 \end{bmatrix}
 \label{eq:L*_definition}
 \end{equation}
with $\mathrm{b}=(b_1,b_2,\dots,b_N)^T$, $\mathcal{B} = diag(\mathrm{b})$ and $0_{1\times N}$ denoting a matrix of size $(1\times N)$ with all entries being zero.
\subsection{Error Signals}
The only information exchanged between agents is the relative position, which can be easily obtained in practical implementations. For example, an agent's own camera system could detect the neighbors and measure the distances; no communication between agents would be necessary. 
Let us define an error signal $e_{i,k}(t)$ for each agent as the sum of relative distances to all its neighbors and to the virtual leader, if accessible:
 \begin{equation}
e_{i,k}(t)=\sum_{j=1}^N a_{ij}(y_{j,k}(t)-y_{i,k}(t))+b_i(y_{des}(t)-y_{i,k}(t)).
 \label{e(t)}
 \end{equation}
 \begin{remark}
 	The error function \eqref{e(t)} is for the consensus case, in which all agents aim to follow the same trajectory. If the goal is a desired constant or time-varying formation, additional iteration-invariant terms, the desired distances~$\Delta_{ij}(t)$, have to be added. Thus,
 	 \begin{align}
 	 e_{i,k}(t)= &\sum_{j=1}^N a_{ij}(y_{j,k}(t)-y_{i,k}(t)+\Delta_{ij}(t))\nonumber\\
 	 &+b_i(y_{des}(t)-y_{i,k}(t)+\Delta_{i}(t)).
 	 \label{e(t)_with distances}
 	 \end{align} 
 \end{remark}
\section{DISTRIBUTED LEARNING}
We use the distributed input update rule
\begin{align}
u_{i,k+1}(t)=u_{i,k}(t)+L(\timeshift)e_{i,k}(t+r)
\label{inputupdate}
\end{align}
with $i\in \{1,\dots,N\} \text{ and } t\in \{1,\dots,T-r\}$ and causal learning function
\begin{equation}
L(\timeshift)=l_0+l_1\timeshift^{-1}+l_2 \timeshift^{-2}+\dots,
\label{L(q)}
\end{equation}
 where $r$ is the relative degree. The block diagram is shown in Figure \ref{fig:ILC}. In the following, the relative degree is assumed to be one without loss of generality. A higher relative degree can be compensated for by a corresponding time shift of the error vector. The acausality in the update rule is not a problem as the input update is computed after the previous execution was completed. Algorithm \eqref{inputupdate} is a simple, first-order ILC algorithm commonly used in literature, see \cite{Bristow2006b}. The order of an ILC algorithm is defined as the number of previous iterations taken into account.

D-type ILC algorithms as in \cite{Ahn2009a,Yang2012c,Meng2012d} represent special cases of \eqref{inputupdate}, where $L(\timeshift)$ has a predefined form.
\begin{figure}
	\centering
	\vspace{2mm}
	\begin{tikzpicture}[auto, node distance=1.5cm,>=latex']
		\node [input, name=input] {};
		\node [block, right of=input] (memory1) {Memory};
		\node [block, right of=memory1] (learningfcn) {L(q)};
		\node [sum, right of=learningfcn] (sumILC) {};
		\node [block, above of=sumILC, node distance=1.cm] (memory2) {Memory};
		\node [connection, right of=sumILC, node distance=1.cm] (connec) {};
		\node [block, right of=connec, node distance=1.cm] (plant) {P(q)};
		\node [output, right of=plant] (output) {};
	
		\draw [draw,->] (input) -- node [name=e] {$e_i$} (memory1);
		\draw [-] (sumILC) -- node {$u_i$} (connec);
		\draw [->] (connec) -- (plant);
		\draw [->] (connec) |- (memory2);
		\draw [->] (plant) -- node [name=y] {$y_i$}(output);
		\draw [->] (memory1) -- (learningfcn);
		\draw [->] (learningfcn) --  (sumILC);
		\draw [->] (memory2) -- (sumILC); 
		\end{tikzpicture}
	\caption{Basic ILC structure for vehicle $v_i$ with the plant $P(\timeshift)$ and the learning function $L(\timeshift)$. The input is computed based on the previous input and previous consensus tracking error $e_i$, which are saved in memory units.}
	\label{fig:ILC}
\end{figure}
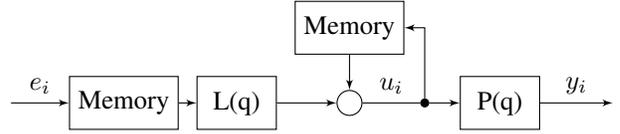

For analyzing stability, we use the lifted-system representation according to \cite{Bristow2006b}, where all samples of a signal are stacked in a large vector. The system dynamics \eqref{LTI-plant} and the input update~\eqref{inputupdate} are now represented by
\begin{align}
\underbrace{
	\begin{bmatrix}
	y_{i,k}(1)\\y_{i,k}(2)\\ \vdots \\ y_{i,k}(T)
	\end{bmatrix}}_{\mathrm{y}_{i,k}}
=&\mathrm{P}
\underbrace{
	\begin{bmatrix}
	u_{i,k}(0)\\u_{i,k}(1)\\ \vdots \\ u_{i,k}(T-1)
	\end{bmatrix}
}_{\mathrm{u}_{i,k}}
+\underbrace{\begin{bmatrix}
	d_{i}(1)\\ d_{i}(2)\\ \dots \\d_{i}(T)
	\end{bmatrix}}_{\mathrm{d}_{i}},\label{SD_inLFS}\\
\underbrace{
	\begin{bmatrix}
	u_{i,k+1}(0)\\u_{i,k+1}(1)\\ \vdots \\ u_{i,k+1}(T-1)
	\end{bmatrix}
}_{\mathrm{u}_{i,k+1}}
=&
\underbrace{
	\begin{bmatrix}
	u_{i,k}(0)\\u_{i,k}(1)\\ \vdots \\ u_{i,k}(T-1)
	\end{bmatrix}
}_{\mathrm{u}_{i,k}}
+
\mathrm{L}\underbrace{\begin{bmatrix}
	e_{i,k}(1)\\e_{i,k}(2)\\ \vdots \\ e_{i,k}(T)
	\end{bmatrix}}_{\mathrm{e}_{i,k}}
\label{IUR_inLFS}
\end{align}
with lower-triangular Toeplitz matrices
\begingroup
\begin{equation*}
\mathrm{P}=\begin{bmatrix}
		p_1 & 0 & \dots & 0\\
		p_2 & p_1 & \dots & 0\\
		\vdots & \vdots & \ddots & \vdots \\
		p_T & p_{T-1} & \dots & p_1
	\end{bmatrix}\hspace{-1mm},
\mathrm{L}=\begin{bmatrix}
		l_0 & 0 & \dots & 0\\
		l_1 & l_0 & \dots & 0\\
		\vdots & \vdots & \ddots & \vdots \\
		l_{T-1} & l_{T-2} & \dots & l_0
	\end{bmatrix}\hspace{-1mm}.
\end{equation*}
\endgroup
We combine all single-agent dynamics into one equation
\begin{align}
\underbrace{
	\begin{bmatrix}
	\mathrm{y}_{1,k}\\\mathrm{y}_{2,k}\\ \vdots \\ \mathrm{y}_{N,k}
	\end{bmatrix}}_{\mathrm{Y}_k}
=&
\underbrace{
	\begin{bmatrix}
	\mathrm{P} & 0 & \dots & 0\\
	0 & \mathrm{P} & \dots & 0\\
	\vdots & \vdots & \ddots & \vdots \\
	0 & \dots & \dots & \mathrm{P}
	\end{bmatrix}
}_{\mathrm{I}_N\otimes\mathrm{P}}
\underbrace{
	\begin{bmatrix}
	\mathrm{u}_{1,k}\\\mathrm{u}_{2,k}\\ \vdots \\ \mathrm{u}_{N,k}
	\end{bmatrix}
}_{\mathrm{U}_{k}}
+\underbrace{\begin{bmatrix}
	\mathrm{d}_{1}\\ \mathrm{d}_{2}\\ \vdots \\\mathrm{d}_{N}
	\end{bmatrix}}_{\mathrm{D}}
\label{eq:dynamics_LSF_MAS},
\end{align}
where $\otimes$ denotes the Kronecker product and $\mathrm{I}_N$ the \mbox{$(N\times N)$} identity matrix.
Analogously, using \eqref{e(t)} and the graph-theoretic definitions from Section II, the multi-agent version \mbox{of \eqref{IUR_inLFS}} is
\begin{align}
\mathrm{U}_{k+1} = &\mathrm{U}_k -(\mathrm{I}_N\otimes\mathrm{L})\,\cdot\nonumber\\ &\Big(\big((\mathcal{L_G+B})\otimes\mathrm{I}_T\big)\mathrm{Y}_k -(\mathrm{b}\otimes \mathrm{I}_T)\mathrm{y}_{des}\Big)
\label{eq:ILC_LSF_MAS}
\end{align}
with $\mathrm{y}_{des}=(y_{des}(1),y_{des}(2),\dots,y_{des}(T))^T$. For the sake of simplicity, we collect all iteration-invariant terms, that is, terms only depending on the initial conditions or the reference trajectory, in $\delta$. Then, by plugging~\eqref{eq:dynamics_LSF_MAS} into~\eqref{eq:ILC_LSF_MAS} and using the property of the Kronecker product that ${(A\otimes B)(C\otimes D)=AC\otimes BD}$, it follows
\begin{align}
\mathrm{U}_{k+1} 
&=\mathrm{U}_k - (\mathrm{I}_N\otimes\mathrm{L})\big((\mathcal{L_G+B})\otimes\mathrm{I}_T\big) (\mathrm{I}_N\otimes\mathrm{P})\mathrm{U}_k+\delta\nonumber\\
&= \mathrm{U}_k - \big(\mathrm{I}_N(\mathcal{L_G+B})\mathrm{I}_N\otimes\mathrm{L}\mathrm{I}_T\mathrm{P}\big)\mathrm{U}_k +\delta \nonumber\\
&= \big(\mathrm{I}_{NT}-(\mathcal{L_G+B})\otimes\mathrm{L}\mathrm{P}\big)\mathrm{U}_k + \delta.
\label{eq:U_ILCdynamics}
\end{align}
Based on stability theory for discrete systems, conditions for the stability of single-agent ILC were developed in \cite{Norrlof2002} and slightly modified in \cite{Bristow2006b}. The following definitions are taken from the latter and adapted to MAS.
\begin{definition}
	An ILC system is called \textit{asymptotically stable} if there exists $\overline{\mathrm{U}}\in \mathbb{R}^{NT}$ such that $\forall k=\{0,1,\dots\}$
	\begin{equation*}
	|\mathrm{U}_k|\leq \overline{\mathrm{U}}   
	\quad \text{and} \quad \lim\limits_{k \to \infty}\mathrm{U}_k \quad \text{exists.}
	\end{equation*}
\end{definition}
Using this definition, equation \eqref{eq:U_ILCdynamics}, and the notion of the spectral radius $\rho$ as the maximum absolute eigenvalue, we can state the following theorem.
\begin{theorem}
	The multi-agent ILC system \eqref{eq:dynamics_LSF_MAS}-\eqref{eq:U_ILCdynamics} is AS if and only if
	\begin{equation}
	\rho\big(\mathrm{I}_{NT}-(\mathcal{L_G+B})\otimes\mathrm{L}\mathrm{P}\big)<1
	\label{eq:stabilitycondition_ILC}
	\end{equation}	
	or, equivalently,
	\begin{equation}
	\max_i|1-\lambda_i l_0 p_1|<1
	\label{eq:stabilitycondition_ILC_scalar},
	\end{equation}
	where $\lambda_i$ are the eigenvalues of $(\mathcal{L_G+B})$. 
		\label{theorem_stabilitycondition_ILC}
\end{theorem}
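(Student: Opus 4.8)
The plan is to read \eqref{eq:U_ILCdynamics} as a linear, iteration-invariant difference equation in the iteration index $k$, namely $\mathrm{U}_{k+1}=M\,\mathrm{U}_k+\delta$ with transition matrix $M=\mathrm{I}_{NT}-(\mathcal{L_G+B})\otimes\mathrm{L}\mathrm{P}$ and constant forcing term $\delta$. Its closed form is $\mathrm{U}_k=M^k\mathrm{U}_0+\sum_{j=0}^{k-1}M^j\delta$. Standard linear-systems theory for discrete iterations, as invoked in the single-agent setting of \cite{Norrlof2002,Bristow2006b}, then yields that the sequence stays bounded and $\lim_{k\to\infty}\mathrm{U}_k$ exists for every $\mathrm{U}_0$ and $\delta$ exactly when $M^k\to 0$, i.e.\ when $\rho(M)<1$; in that case $\mathrm{I}_{NT}-M=(\mathcal{L_G+B})\otimes\mathrm{L}\mathrm{P}$ is invertible and $\sum_{j\geq 0}M^j=(\mathrm{I}_{NT}-M)^{-1}$ pins down the limit. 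Conversely, an eigenvalue of $M$ of modulus $\geq 1$ generates an error component (the difference of the responses to two initial conditions offset by the corresponding eigenvector) that fails either to stay bounded or to converge. This establishes asymptotic stability $\iff$ \eqref{eq:stabilitycondition_ILC}.

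The second and more substantial task is to reduce the spectral-radius condition \eqref{eq:stabilitycondition_ILC} to the scalar test \eqref{eq:stabilitycondition_ILC_scalar}. Here I would exploit the Kronecker structure: for square matrices $X,Y$ the spectrum of $X\otimes Y$ is precisely $\{\lambda\mu:\lambda\in\sigma(X),\ \mu\in\sigma(Y)\}$. Applying this with $X=\mathcal{L_G+B}$ and $Y=\mathrm{L}\mathrm{P}$, the eigenvalues of $(\mathcal{L_G+B})\otimes\mathrm{L}\mathrm{P}$ are the products $\lambda_i\nu_j$, where the $\lambda_i$ are the eigenvalues of $\mathcal{L_G+B}$ and the $\nu_j$ those of $\mathrm{L}\mathrm{P}$. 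Hence the eigenvalues of $M$ are $1-\lambda_i\nu_j$, and \eqref{eq:stabilitycondition_ILC} reads $\max_{i,j}|1-\lambda_i\nu_j|<1$.

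The crux is then to identify $\sigma(\mathrm{L}\mathrm{P})$. Both $\mathrm{L}$ and $\mathrm{P}$ are lower-triangular Toeplitz matrices whose entire main diagonals are the constants $l_0$ and $p_1$, respectively; their product $\mathrm{L}\mathrm{P}$ is therefore again lower triangular with every diagonal entry equal to $l_0 p_1$. A triangular matrix carries its eigenvalues on the diagonal, so $\mathrm{L}\mathrm{P}$ has the single eigenvalue $\nu=l_0 p_1$ of algebraic multiplicity $T$. This collapses the double index, $\max_{i,j}|1-\lambda_i\nu_j|=\max_i|1-\lambda_i l_0 p_1|$, which is exactly \eqref{eq:stabilitycondition_ILC_scalar}.

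I expect this last observation to be both the main step and the conceptual payoff of the result: because only the \emph{leading} coefficients $l_0$ and $p_1$ enter the diagonal of $\mathrm{L}\mathrm{P}$, all higher-order terms of the learning function $L(\timeshift)$ and of the plant $P(\timeshift)$ are irrelevant to stability, which is precisely what lifts the D-type restriction while preserving a simple scalar test. The one point requiring care is the degenerate case: the reduction presupposes $l_0 p_1\neq 0$ (the factor $p_1$ is nonzero by the relative-degree-one normalization), since $l_0=0$ would place the eigenvalue $1$ in $\sigma(M)$ and thereby preclude asymptotic stability.
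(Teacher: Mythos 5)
Your proposal is correct, and its key step goes through a different lemma than the paper's proof. For the reduction of \eqref{eq:stabilitycondition_ILC} to \eqref{eq:stabilitycondition_ILC_scalar}, you invoke the standard spectral property of Kronecker products, $\sigma(X\otimes Y)=\{\lambda\mu:\lambda\in\sigma(X),\,\mu\in\sigma(Y)\}$, and then collapse $\sigma(\mathrm{L}\mathrm{P})$ to the single value $l_0p_1$ via triangularity of $\mathrm{L}\mathrm{P}$; the paper instead performs an explicit similarity transformation $(S\otimes\mathrm{I}_T)$ that brings $(\mathcal{L_G+B})$ to a \emph{lower-triangular} Jordan form $J$, and then reads the eigenvalues of $\mathrm{I}_{NT}-J\otimes\mathrm{L}\mathrm{P}$ off the diagonal of the full lifted matrix, which is triangular because $J$, $\mathrm{L}$, $\mathrm{P}$ all are. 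The two arguments are mathematically cousins (the Kronecker spectral theorem is itself proved by such a transformation), and yours is the more modular and economical of the two. What the paper's heavier machinery buys becomes clear in its Theorem 3: there the iteration matrix is $\mathrm{I}_{NT}-(J\otimes\mathrm{L})(\mathrm{I}_N+J\otimes\mathrm{PC})^{-1}(\mathrm{I}_N\otimes\mathrm{P})$, which is \emph{not} of the form $\mathrm{I}-X\otimes Y$, so your black-box spectral property would no longer apply, whereas the similarity-plus-triangularity argument carries over verbatim. Two minor remarks: your final caveat about $l_0p_1\neq 0$ is unnecessary for the stated equivalence, since if $l_0p_1=0$ both \eqref{eq:stabilitycondition_ILC} and \eqref{eq:stabilitycondition_ILC_scalar} fail simultaneously; and in your necessity sketch for $\rho(M)<1$, the difference-of-responses argument does not work for an eigenvalue exactly equal to $1$ (the difference is then constant, hence bounded and convergent), so that case needs the forcing term $\delta$ or a Jordan-block argument --- a wrinkle the paper sidesteps by citing \cite{Norrlof2002}, exactly as you could.
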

Note that we assumed a relative degree $r=1$, and thus $p_1\neq 0$. The eigenvalues of the graph Laplacian can be computed easily. As stability depends only on $l_0$, the remaining parameters of the learning function $L(\timeshift)$ can be tuned to achieve good convergence behavior. 
\begin{proof}
	The first statement \eqref{eq:stabilitycondition_ILC} follows directly from \cite{Norrlof2002} applied to \eqref{eq:U_ILCdynamics}. The latter equation \eqref{eq:stabilitycondition_ILC_scalar} is obtained by applying a similarity transformation to (14) similarly to how it is done for undirected graphs in \cite{Yang2012c}.
	
	As all entries in $(\mathcal{L_G+B})$ are real numbers, this matrix can be transformed into Jordan normal form; that is, it exists a regular matrix $ S\in \mathbb{R}^{N\times N}$ and a Jordan matrix $J \in \mathbb{R}^{N\times N}$ with eigenvalues on the diagonal and possibly ones on the subdiagonal, such that
	\begin{equation}
	S^{-1}(\mathcal{L_G+B})S = J.
	\label{eq:similarity_transformation}
	\end{equation}
	Usually $J$ is defined with ones on the superdiagonal but here we use this less common definition to get lower-triangular matrices.
	As eigenvalues and, thus, the spectral radius remain the same under similarity transformations, it follows
	\begin{align}
	&\rho\big(\mathrm{I}_{NT}-(\mathcal{L_G+B})\otimes\mathrm{L}\mathrm{P}\big)\nonumber\\ 
	=&\rho\Big((S\otimes \mathrm{I}_T)^{-1} \big(\mathrm{I}_{NT}-(\mathcal{L_G+B})\otimes\mathrm{L}\mathrm{P}\big)(S\otimes\mathrm{I}_T)\Big)\nonumber\\
	=& \rho(\mathrm{I}_{NT}-J\otimes\mathrm{L}\mathrm{P}),
	\end{align}
	where L and P are lower triangular matrices, as is $J$. As a result, the eigenvalues are the diagonal entries. Multiplication of triangular matrices does not affect this property, thus \eqref{eq:stabilitycondition_ILC} is equivalent to the scalar condition \eqref{eq:stabilitycondition_ILC_scalar}.
\end{proof}

\begin{theorem}
	For asymptotic stability of the ILC, it is necessary that the graph $\mathcal{G}^*$ contains a spanning tree with the virtual leader as root.
	\label{theorem_spanningTree}
\end{theorem}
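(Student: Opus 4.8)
The plan is to prove the statement by contraposition, reducing the spanning-tree condition to a purely algebraic one: asymptotic stability forces the matrix $\mathcal{L_G}+\mathcal{B}$ to be nonsingular, and nonsingularity of $\mathcal{L_G}+\mathcal{B}$ is in turn equivalent to $\mathcal{G}^*$ containing a spanning tree rooted at the virtual leader. First I would invoke the scalar criterion \eqref{eq:stabilitycondition_ILC_scalar} from Theorem~\ref{theorem_stabilitycondition_ILC}: if $\mathcal{L_G}+\mathcal{B}$ had a zero eigenvalue $\lambda_i=0$, then the corresponding factor would be $|1-\lambda_i l_0 p_1|=1$, which violates $\max_i|1-\lambda_i l_0 p_1|<1$ for \emph{every} choice of the learning gain $l_0$. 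Hence asymptotic stability requires all eigenvalues of $\mathcal{L_G}+\mathcal{B}$ to be nonzero, i.e.\ $\mathcal{L_G}+\mathcal{B}$ must be nonsingular.

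It then remains to establish the graph-theoretic equivalence, and it suffices to prove the contrapositive: if $\mathcal{G}^*$ contains no spanning tree with root $v_0$, then $\mathcal{L_G}+\mathcal{B}$ is singular. Containing a spanning tree rooted at $v_0$ is equivalent to $v_0$ reaching every agent through a directed information-flow path, so the absence of such a tree yields a nonempty set $\mathcal{S}\subseteq\mathcal{V}$ of agents not reachable from $v_0$; let $\mathcal{R}=\mathcal{V}\setminus\mathcal{S}$ collect the reachable ones. The key observation is that every $v_i\in\mathcal{S}$ receives information only from other nodes in $\mathcal{S}$: if $v_i$ listened to $v_0$ ($b_i=1$) or to some $v_j\in\mathcal{R}$ ($a_{ij}=1$), then $v_i$ would itself be reachable, a contradiction. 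Consequently $b_i=0$ and $a_{ij}=0$ for all $v_j\in\mathcal{R}$. Writing $M_{\mathcal{X}\mathcal{Y}}$ for the submatrix of $\mathcal{L_G}+\mathcal{B}$ with rows in $\mathcal{X}$ and columns in $\mathcal{Y}$, reordering the agents as $(\mathcal{R},\mathcal{S})$ makes the matrix block upper-triangular,
\[
\mathcal{L_G}+\mathcal{B}=\begin{bmatrix} M_{\mathcal{R}\mathcal{R}} & M_{\mathcal{R}\mathcal{S}}\\[2pt] 0 & M_{\mathcal{S}\mathcal{S}}\end{bmatrix},
\]
with a vanishing lower-left block.

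The proof is completed by analysing the diagonal block $M_{\mathcal{S}\mathcal{S}}$. For $i\in\mathcal{S}$ the $i$-th row of $\mathcal{L_G}+\mathcal{B}$ sums to $a_{ii}+b_i=0$ (no self-loops, and $b_i=0$) and has no support on the columns indexed by $\mathcal{R}$, so every row of $M_{\mathcal{S}\mathcal{S}}$ sums to zero; hence $M_{\mathcal{S}\mathcal{S}}\allones=0$ and $\det M_{\mathcal{S}\mathcal{S}}=0$. By the block-triangular structure $\det(\mathcal{L_G}+\mathcal{B})=\det M_{\mathcal{R}\mathcal{R}}\cdot\det M_{\mathcal{S}\mathcal{S}}=0$, so $\mathcal{L_G}+\mathcal{B}$ is singular, which together with the first paragraph shows the system is not asymptotically stable. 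I expect the main obstacle to be this graph-theoretic step rather than the eigenvalue bookkeeping: one must resist using the indicator vector $\allones_{\mathcal{S}}$ as a \emph{right} null vector of the full matrix, since it generally is not one (a reachable agent may well listen to an unreachable one), and instead exploit the vanishing off-diagonal block to factor the determinant. Alternatively, the equivalence between a simple zero eigenvalue of the extended Laplacian $\mathcal{L}^*$ and the existence of a spanning tree is a standard fact for directed graphs and could simply be cited, noting that $\mathcal{L}^*$ is itself block triangular so that its nonzero spectrum coincides with that of $\mathcal{L_G}+\mathcal{B}$.
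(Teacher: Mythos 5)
Your proof is correct, and its first half coincides exactly with the paper's: both arguments start from the scalar criterion \eqref{eq:stabilitycondition_ILC_scalar} of Theorem~\ref{theorem_stabilitycondition_ILC} and observe that a zero eigenvalue $\lambda_i=0$ of $\mathcal{L_G}+\mathcal{B}$ yields $|1-\lambda_i l_0 p_1|=1$ regardless of the learning gain, so asymptotic stability forces $\mathcal{L_G}+\mathcal{B}$ to be nonsingular. Where you diverge is the graph-theoretic half, and here you do from scratch what the paper handles by citation --- indeed, the alternative you mention in your closing sentence \emph{is} the paper's proof: it invokes the standard fact \cite{Mesbahi2010} that $\mathcal{L}^*$ has exactly one zero eigenvalue if and only if $\mathcal{G}^*$ contains a spanning tree, notes that the virtual leader must be the root since it has in-degree zero, and reads off from the block structure \eqref{eq:L*_definition} that the remaining spectrum of $\mathcal{L}^*$ is that of $\mathcal{L_G}+\mathcal{B}$. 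You instead prove the needed implication directly: partitioning the agents into the set $\mathcal{S}$ unreachable from $v_0$ and its complement $\mathcal{R}$, showing $b_i=0$ and $a_{ij}=0$ for $i\in\mathcal{S}$, $j\in\mathcal{R}$, and factoring the determinant through the resulting block-triangular form with $M_{\mathcal{S}\mathcal{S}}\allones=0$. Your route buys self-containedness, proves precisely (and only) the contrapositive direction actually needed --- the cited equivalence is stronger than necessary --- and makes rigorous the step the paper compresses into ``Therefore \dots and equivalently''; your caution that $\allones_{\mathcal{S}}$ is generally \emph{not} a right null vector of the full matrix flags a genuine pitfall and is correctly circumvented by the determinant factorization. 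The paper's route buys brevity at the cost of leaning on a textbook result.
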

\begin{proof}
It is easy to see that \eqref{eq:stabilitycondition_ILC_scalar} only holds if $\lambda_i \neq 0$. $\mathcal{L}^*$ has exactly one eigenvalue at zero if and only if the graph~$\mathcal{G}^*$ contains a spanning tree \cite{Mesbahi2010}. Thus, the virtual leader must be the root node as it has an in-degree of 0. Therefore, $(\mathcal{L_G+B})$ is full rank, see \eqref{eq:L*_definition}, and equivalently, $\lambda_i \neq 0.$
\end{proof}
\begin{remark}
	The extension of these results to quadratic multiple-input, multiple-output (MIMO) agent dynamics is straightforward. Condition \eqref{eq:stabilitycondition_ILC_scalar} stays the same, but is no longer scalar as $l_0$ and $p_0$ are quadratic matrices.
\end{remark}
\section{COMBINATION WITH FEEDBACK}
Assume there is an additional feedback term in the time domain that can be described as a function of the relative distances between neighbors,
\begin{align}
u^{\mathrm{FB}}_{i,k}(t)=C(\timeshift)e_{i,k}(t)
\label{eq:u_FB}
\end{align}
with $e_{i,k}(t)$ as in \eqref{e(t)} and a causal feedback function
\begin{align}
C(\timeshift) = c_0+c_1\timeshift^{-1}+c_2\timeshift^{-2}+\dots .
\end{align}
This could be a feedback controller or any other dynamic coupling, which may not be known.
\begin{theorem}
	Given an arbitrary feedback component in the form of \eqref{eq:u_FB}, the stability of the distributed ILC algorithm \eqref{inputupdate} is not affected by the feedback if applied in parallel structure, see \cite{Bristow2006b} and Figure \ref{fig:ILCwFB}.
\end{theorem}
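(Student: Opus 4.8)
The plan is to cast the closed loop consisting of plant, parallel feedback and error measurement into the lifted form, derive the iteration-to-iteration map of the learned feedforward input $\mathrm{U}_k$, and show that its system matrix obeys the very same scalar condition \eqref{eq:stabilitycondition_ILC_scalar} as in Theorem~\ref{theorem_stabilitycondition_ILC}. In the parallel architecture the plant is driven by the sum of the learned input and the feedback signal $\mathrm{U}^{\mathrm{FB}}_k=(\mathrm{I}_N\otimes\mathrm{C})\,\mathrm{E}_k$, where $\mathrm{C}$ is the lower-triangular Toeplitz matrix of $C(\timeshift)$ and $\mathrm{E}_k$ is the stacked consensus error. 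First I would assemble, in the lifted domain, the output equation \eqref{eq:dynamics_LSF_MAS}, the error $\mathrm{E}_k=-\big((\mathcal{L_G+B})\otimes\mathrm{I}_T\big)\mathrm{Y}_k+\delta$, and the feedback law, and substitute the first two into the third. Because the feedback routes each agent's output back into its own error within a single trial, eliminating $\mathrm{Y}_k$ and $\mathrm{U}^{\mathrm{FB}}_k$ leaves an algebraic loop: solving for $\mathrm{E}_k$ in terms of $\mathrm{U}_k$ requires inverting $\mathrm{M}:=\mathrm{I}_{NT}+\big((\mathcal{L_G+B})\otimes\mathrm{P}\mathrm{C}\big)$, where $\mathrm{P}\mathrm{C}$ is the lifted matrix of the series connection $P(\timeshift)C(\timeshift)$.

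Substituting $\mathrm{E}_k=-\mathrm{M}^{-1}\big((\mathcal{L_G+B})\otimes\mathrm{P}\big)\mathrm{U}_k+\delta$ into the learning update \eqref{inputupdate} then yields the closed iteration map $\mathrm{U}_{k+1}=\big[\mathrm{I}_{NT}-(\mathrm{I}_N\otimes\mathrm{L})\,\mathrm{M}^{-1}\big((\mathcal{L_G+B})\otimes\mathrm{P}\big)\big]\mathrm{U}_k+\delta$. I would then apply exactly the similarity transformation $S\otimes\mathrm{I}_T$ from the proof of Theorem~\ref{theorem_stabilitycondition_ILC}, replacing $(\mathcal{L_G+B})$ by its Jordan form $J$. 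Since $\mathrm{L}$, $\mathrm{P}$, $\mathrm{C}$ and $J$ are all lower triangular, so are $\mathrm{M}$ and the entire transformed system matrix, and its spectral radius is therefore read directly off the diagonal.

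The crux, and the step I expect to be the main obstacle, is to show that the extra factor $\mathrm{M}^{-1}$ introduced by the feedback leaves those diagonal entries untouched. The key observation is that the plant has relative degree one, so the series $P(\timeshift)C(\timeshift)=p_1c_0\,\timeshift^{-1}+\dots$ is strictly proper: the feedback signal injected at time $t$ influences the error only from time $t+1$ onward. Consequently the lifted matrix $\mathrm{P}\mathrm{C}$ is \emph{strictly} lower triangular with zero diagonal, so after transformation $\mathrm{M}=\mathrm{I}_{NT}+J\otimes\mathrm{P}\mathrm{C}$ is unit lower triangular. This has two consequences: $\mathrm{M}$ is always invertible---so the closed loop is automatically well posed, with no extra gain restriction on $C(\timeshift)$---and $\mathrm{M}^{-1}$ again has all diagonal entries equal to one. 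The care required here is purely in the time-index bookkeeping: one must track that, because the error is indexed one sample later than the input it multiplies, the direct feedback gain $c_0$ falls strictly below the diagonal and hence cannot enter the diagonal of the iteration matrix.

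Finally I would multiply out the diagonals. Using that for lower-triangular matrices the diagonal of a product is the product of the diagonals and the diagonal of an inverse is the entrywise reciprocal, the $i$-th diagonal entry of $(\mathrm{I}_N\otimes\mathrm{L})\,\mathrm{M}^{-1}\,(J\otimes\mathrm{P})$ equals $l_0\cdot 1\cdot(\lambda_i p_1)=\lambda_i l_0 p_1$, so the transformed iteration matrix has diagonal entries $1-\lambda_i l_0 p_1$---identical to the feedback-free case. Hence $\rho$ and the stability condition $\max_i|1-\lambda_i l_0 p_1|<1$ are unchanged, which proves that the parallel feedback does not affect asymptotic stability of the distributed ILC.
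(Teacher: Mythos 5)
Your proposal is correct and follows essentially the same route as the paper's proof: lifted representation, resolution of the within-trial algebraic loop by inverting $\mathrm{I}_{NT}+(\mathcal{L_G+B})\otimes\mathrm{PC}$, the Jordan-form similarity transformation from Theorem~\ref{theorem_stabilitycondition_ILC}, and reading off unchanged diagonal entries $1-\lambda_i l_0 p_1$ because the sample-index offset places $c_0$ strictly below the diagonal (the paper encodes this as the subdiagonal shift in $\mathrm{C}$ plus the $C_0 e_i(0)$ term, making $\mathrm{C}$ itself zero-diagonal, while you equivalently argue via the strictly proper loop $P(\timeshift)C(\timeshift)$). Your only cosmetic deviation is eliminating $\mathrm{Y}_k$ to solve for $\mathrm{E}_k$ rather than solving for $\mathrm{Y}_k$ as the paper does, which yields the same iteration matrix up to commuting factors.
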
 
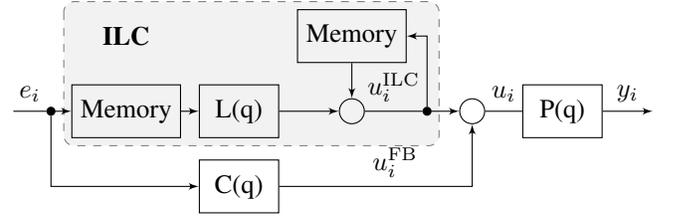
\begin{figure}
	\vspace{2mm}
	\centering
	\begin{tikzpicture}[auto, node distance=1.5cm,>=latex']
	\node [input, name=input] {};
	\node [connection, right of=input, node distance=0.5cm] (connec) {};
	\node [block, right of=connec, node distance=1cm] (memory1) {Memory};
	\node [block, right of=memory1] (learningfcn) {L(q)};
	\node [block, below of=learningfcn, node distance=1cm] (controller) {C(q)};
	\node [sum, right of= learningfcn] (sumILC) {};
	\node [connection, right of=sumILC, node distance=1cm] (connec2) {};
	\node [sum, right of=connec2, node distance=0.6cm] (sum) {};
	\node [above of=memory1, node distance=1cm] (ILC) {\textbf{ILC}};
	\node [block, above of=sumILC, node distance=1cm] (memory2) {Memory};	
	\node [block, right of=sum, node distance=1.2cm] (plant) {P(q)};
	\node [output, right of=plant, node distance=1.2cm] (output) {};
	\draw [draw,-] (input) -- node [name=e] {$e_i$} (connec);
	\draw [->] (connec) |- (controller);
	\draw [->] (controller) -| node [above, pos=.3] {$u_i^{\mathrm{FB}}$} (sum);
	\draw [-] (sumILC) -- node {$u_i^{\mathrm{ILC}}$} (connec2);
	\draw [->] (connec2) -- (sum);
	\draw [-] (sum) -- node {$u_i$} (plant);
	\draw [->] (connec2) |- (memory2);
	\draw [->] (plant) -- node [name=y] {$y_i$}(output);
	\draw [->] (connec) -- (memory1);
	\draw [->] (memory1) -- (learningfcn);
	\draw [->] (learningfcn) -- (sumILC);
	\draw [->] (memory2) -- (sumILC);
	\begin{pgfonlayer}{background}
	\path (memory1.west |- memory2.north)+(-0.1,0.1) node (a) {};
	\path (memory1.south -| connec2.east)+(+0.1,-0.1) node (b) {};
	\path[fill=black!5,rounded corners, draw=black!50, dashed]
	(a) rectangle (b);
	\end{pgfonlayer}
	\end{tikzpicture}
	\caption{Time-domain feedback control combined with iteration-domain ILC for vehicle $v_i$. The feedback term $C(q)$ computes updates in every time step, while the ILC part computes updates after each iteration. }
	\label{fig:ILCwFB}
\end{figure}
\begin{proof}
The new input signal is $u_{i,k}=u_{i,k}^{\mathrm{ILC}}+u_{i,k}^{\mathrm{FB}}$, where $u_{i,k}^{\mathrm{ILC}}$ represents the ILC input described in Section~IV. Using again the lifted-system representation and inserting into \eqref{SD_inLFS} yields
\begin{align}
\mathrm{y}_{i,k}
=&
\mathrm{P}
\left( \mathrm{u}^{\mathrm{ILC}}_{i,k} + \mathrm{C}\mathrm{e}_{i,k}+ C_0 e_i(0)\right)
+\mathrm{d}_{i,k},
\label{eq:SD_withFB_inLFS}
\end{align} 
with
\begin{align*}
\mathrm{C}=\begin{bmatrix}
0 & 0 & \dots & 0&0\\
c_0 & 0 & \dots & 0&0\\
c_1 & c_0& \dots & 0&0\\
\vdots & \vdots & \ddots & \vdots& \vdots \\
c_{T-2} & c_{T-3} & \dots & c_0 &0
\end{bmatrix}, 
\mathrm{C}_0=\begin{bmatrix}
c_0\\
c_1\\
c_2\\
\vdots\\
c_{T-1}
\end{bmatrix}.
\end{align*}
The different sample times in the definitions of $\mathrm{e}_{i,k}$ and $\mathrm{u}_{i,k}$ cause the additional initial condition term $C_0 e_i(0)$ in~\eqref{eq:SD_withFB_inLFS} and the subdiagonal shift in $\mathrm{C}$. Analogously to before, the single-agent signals can be collected into the multi-agent form
\begin{align}
\mathrm{Y}_{k}
=&
(\mathrm{I}_N\otimes\mathrm{P})
\Big( \mathrm{U}^{\mathrm{ILC}}_{k} - (\mathrm{I}_N\otimes\mathrm{C})\big(\mathcal{(L_G+B)}\otimes \mathrm{I}_T\big)\Big)\mathrm{Y}_k+\delta
\nonumber\\
=&(\mathrm{I}_N\otimes\mathrm{P}) \mathrm{U}^{\mathrm{ILC}}_{k}
-\big(\mathcal{(L_G+B)}\otimes \mathrm{PC}\big)\mathrm{Y}_k+\delta
\nonumber\\
=&\big(\mathrm{I}_N+\mathcal{(L_G+B)}\otimes \mathrm{PC}\big)^{-1}
(\mathrm{I}_N\otimes\mathrm{P}) \mathrm{U}^{\mathrm{ILC}}_{k}+\delta,
\label{eq:Y_withFB_MAS_LSF}
\end{align}
where all iteration-invariant terms are gathered in $\delta$. Invertibility is guaranteed as we will see later. Inserting~\eqref{eq:Y_withFB_MAS_LSF} into~\eqref{eq:ILC_LSF_MAS} leads to 
\begin{align}
\mathrm{U}^{\mathrm{ILC}}_{k+1} =& \mathrm{U}^{\mathrm{ILC}}_k - (\mathrm{I}_N\otimes\mathrm{L})\big((\mathcal{L_G+B})\otimes\mathrm{I}_T\big)
\nonumber \\
&\big(\mathrm{I}_N+\mathcal{(L_G+B)}\otimes \mathrm{PC}\big)^{-1}
(\mathrm{I}_N\otimes\mathrm{P}) \mathrm{U}^{\mathrm{ILC}}_{k}+\delta \nonumber\\
=& \bigg(\mathrm{I}_{NT} - \Big(\big((\mathcal{L_G+B})\otimes\mathrm{L}\big)\nonumber\\
&\big(\mathrm{I}_N+\mathcal{(L_G+B)}\otimes \mathrm{PC}\big)^{-1}
(\mathrm{I}_N\otimes\mathrm{P})\Big)\bigg) \mathrm{U}^{\mathrm{ILC}}_{k}+\delta \nonumber\\
=& M\mathrm{U}^{\mathrm{ILC}}_{k}+\delta.
\label{eq:U_ILC_dynamics_withFB}
\end{align}
As stability is determined by the spectral radius $\rho(M)$, we must investigate the eigenvalues of this matrix. The similarity transformation \eqref{eq:similarity_transformation} can be applied to get
\begin{align}
\rho(M)
=\rho(\mathrm{I}_{NT} - (J\otimes\mathrm{L})(\mathrm{I}_N+J\otimes \mathrm{PC})^{-1}
(\mathrm{I}_N\otimes\mathrm{P})).
\end{align}
As all matrices are in lower triangular form, the eigenvalues are the diagonal entries. As all diagonal entries of C are 0, it can be seen that those of $(\mathrm{I}_N+J\otimes \mathrm{PC})$ are all 1. Thus, its inverse exists with diagonal entries equal to one. Finally, we end up with the same condition as in Theorem~\ref{theorem_stabilitycondition_ILC}.
\end{proof}

\section{QUADROTOR EXPERIMENTS}
To verify the effectiveness of the derived multi-agent learning framework, we implemented the proposed algorithm on a group of quadrotors. The vehicle we used is the Parrot AR.Drone~2.0, which comes with a blackbox onboard controller. Its inputs are the desired roll, $\phi_{des}$, and pitch, $\theta_{des}$, Euler angles, the desired turn rate around the vehicle's vertical axis, $\omega_{\mathsf{z}}$, and the desired velocity, $\dot{\mathsf{z}}_{des}$, in global $\mathsf{z}$-direction. Commands are sent at a frequency of~${f=66.67}$Hz. Position and attitude information is provided by a central overhead motion capture camera system. As the camera system and an appropriate state estimator provide all necessary position, velocity and rotation information, an exact input-output linearization can be applied \cite{Zhou2010} canceling out all the nonlinearities. Consequently, the resulting quadrotor dynamics can be approximated by continuous-time double integrators, decoupled for $\mathsf{x}$- and $\mathsf{y}$-direction.
Discretization using the Taylor series expansion with time constant $\Delta t=\frac{1}{f}=0.015 \mathrm{s}$ leads to 
\begin{align}
x_{i,k}(t+1) &= \begin{bmatrix}
1 & 0.015\\
0 & 1
\end{bmatrix}x_{i,k}(t)+\begin{bmatrix}
\frac{1}{2}0.015^2 \\ 0.015
\end{bmatrix}u_{i,k}(t-\tau)\nonumber \\
y_{i,k}(t) &=\begin{bmatrix}
1&0
\end{bmatrix}x_{i,k}(t) \label{eq:dronedynamics_statespace},
\end{align} 
where input $u$ and output $y$ denote acceleration and position in $\mathsf{x}$- or $\mathsf{y}$-direction, respectively, while $\tau$ represents the time delay of the system consisting of delays in input and output signal processing and plant-inherent delays due to the simplifed modeling. As the mathematical model is linear, all delays can
be combined into one term.
Based on \eqref{eq:dronedynamics_statespace}, we choose an underlying consensus feedback controller
\begin{equation}
u_{i}^{\mathrm{FB}}(t)=\frac{2\eta}{\tau_c}\dot e_{i,k}(t) + \frac{1}{\tau_c^2}e_{i,k}(t),
\label{eq:FB_experiment}
\end{equation}
with error function $e_{i,k}(t)$ as defined in \eqref{e(t)}. With positive controller gains, $\eta$ and $\tau_c$, this controller guarantees asymptotic stability for double-integrator agents under the condition that the communication graph contains a spanning tree. As this paper does not focus on the stability of consensus feedback controllers, we refer to \cite{Ren2008a} for further explanations. Approximating the velocity by the difference quotient of the position, the controller can be discretized and written in the form defined in \eqref{eq:u_FB}.

For the iterative learning, a PD-type (proportional and derivative actions) input update rule is used, 
\begin{align}
u_{i,k+1}^{\mathrm{ILC}}(t)=&u_{i,k}^{\mathrm{ILC}}(t)+k_p e_{i,k}(t+r-1)\nonumber\\
&+k_d \frac{e_{i,k}(t+r)-e_{i,k}(t+r-2)}{2\Delta t},
\end{align}
with learning gains, $k_p$ and $k_d$, step size $\Delta t$, and relative degree $r$. The central difference quotient is used for better noise suppression \cite{Chen2002}. This is a special case of \eqref{inputupdate} with 
\begin{align}
L(\timeshift)={\frac{k_d}{2\Delta t}}+k_p\timeshift^{-1} -\frac{k_d}{2\Delta t}\timeshift^{-2}.
\end{align}
To determine the relative degree of the real vehicles, several effects must be taken into account, including underlying dynamics from the onboard controller and from the motors, that were neglected in the modeling, and system time delays mainly due to the wireless communication between the computer and the vehicle.
Since these effects are difficult to measure, we identified the relative degree experimentally. 
The communication delays can also destabilize the closed-loop system with the feedback controller \eqref{eq:FB_experiment}, if the graph contains cycles, see \cite{Munz2008} or \cite{Hu2010}.

We consider a team of two quadrotors with agent $v_1$ getting information from the virtual leader and agent $v_2$ only from agent $v_1$. Due to space and wireless communication limitations, it was not possible to include more agents in the current experimental setup. However, simulations verified that the presented theoretic results work as expected even for larger teams and more complex graphs.
We chose the following setup: controller parameters $\eta = 0.707$ and~${\tau_c = 1.7}$, ILC learning gains $k_p = 0.35$ and $k_d = 17.3$, ILC time shift $r = 49$.
Assuming the time shift matches the relative degree and with the eigenvalues of the corresponding graph Laplacian $\lambda_{1,2}=1$, we can see that \eqref{eq:stabilitycondition_ILC_scalar} holds and thus asymptotic stability is guaranteed.

Figures \ref{fig:timeplots_FB}-\ref{fig:errorConvergence_FB} show the experimental results for the ILC with the underlying consensus feedback controller over 20~iterations. Both quadrotors were flown at the same time in a given formation with a fixed distance apart. For the plots, the distance offset was subtracted. We repeated the whole learning experiment ten times and show the average error convergence and standard deviation in Figure~\ref{fig:errorConvergence_FB}.

Figure~\ref{fig:timeplots_FB} shows the position trajectories over time. It can be seen that in the first iteration, where the ILC input is zero, the first vehicle (in blue) is delayed and shows lower amplitudes. As the second vehicle (in red) only follows the first one, its performance is even worse. After some iterations (see solid lines), both drones learn to track the reference. 
\begin{figure} 
	\centering 
	\includegraphics[width=0.5\textwidth]{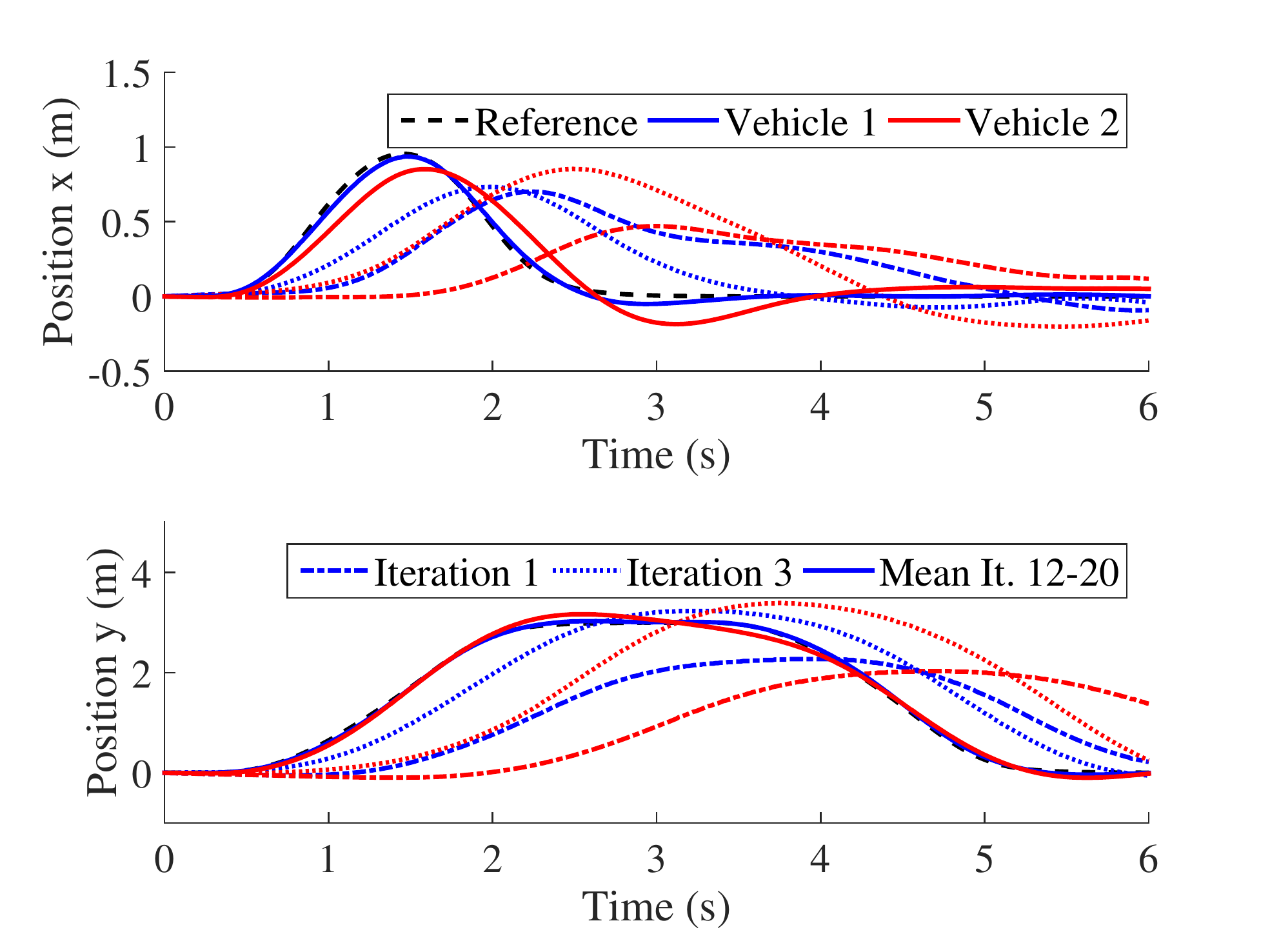}
	\caption{Trajectories over time in $\mathsf{x}$- and $\mathsf{y}$-direction for the ILC algorithm with underlying consensus feedback. Vehicle 1 (blue) and vehicle 2 (red) learn to follow the desired reference trajectory (dashed black). Highlighted are the first (dash-dotted) and the third (dotted) iteration, and the mean over iterations 12-20 (solid).}
	\label{fig:timeplots_FB}
	\vspace{-2mm}
\end{figure}
In Figure~\ref{fig:statespace_FB}, the corresponding workspace trajectories are depicted. Note that the goal was not to track this D-shaped trajectory but to follow the timed reference signal in Figure~\ref{fig:timeplots_FB} separately for $\mathsf{x}$ and $\mathsf{y}$. The performance of both vehicles improves significantly over iterations. However, it can be seen that agent $v_2$ learns slower as it has no access to the desired trajectory.
\begin{figure} 
	\centering 
	\includegraphics[width=0.5\textwidth]{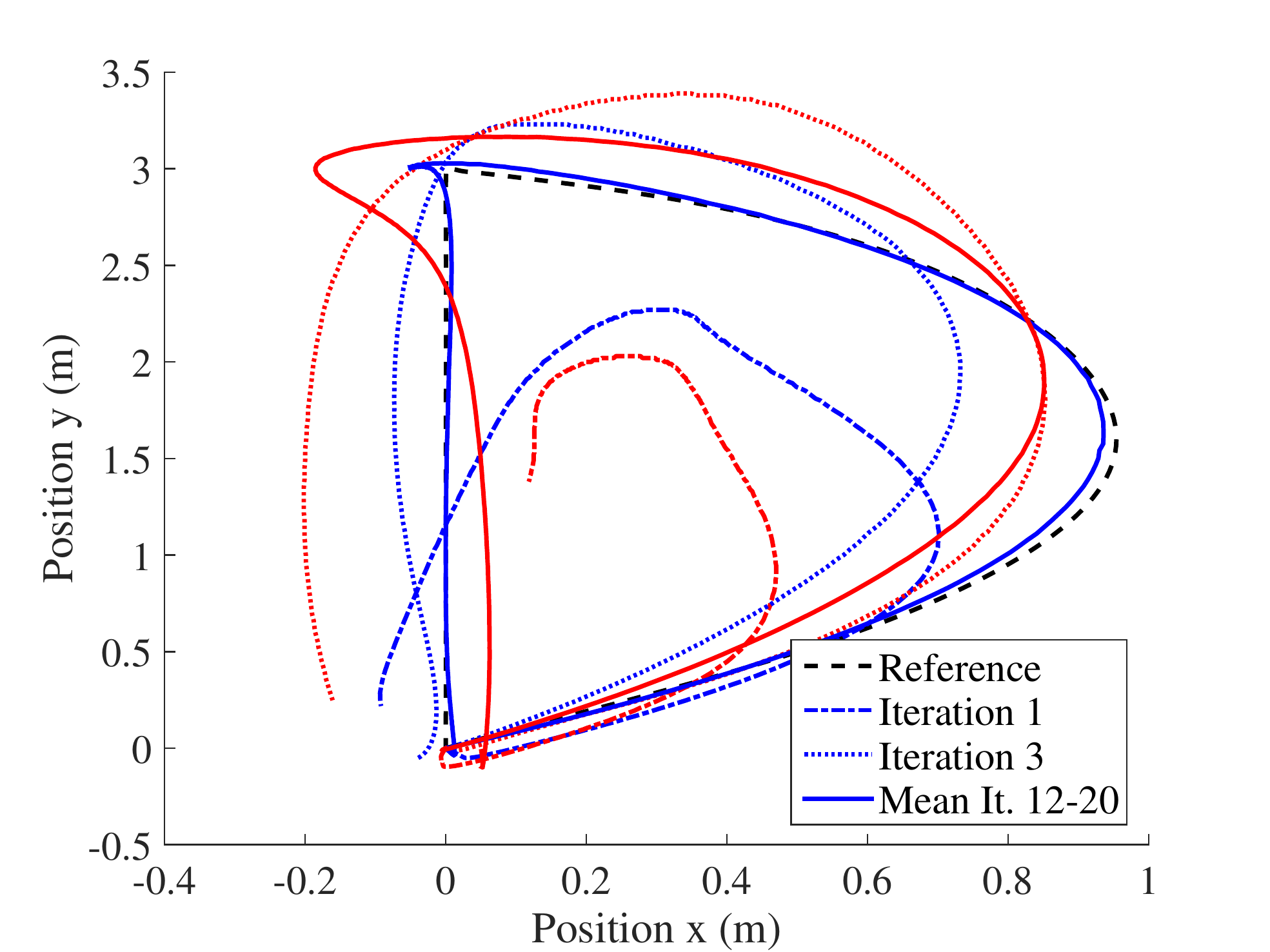}
	\caption{Trajectories in the workspace ($\mathsf{y}$ over $\mathsf{x}$) for the ILC algorithm with underlying consensus feedback. Vehicle 1 (blue) and vehicle 2 (red) significantly improve their performance with respect to the desired reference trajectory (dashed black). Highlighted are the first (dash-dotted) and the third (dotted) iteration, and the mean over iterations 12-20 (solid).}
	\label{fig:statespace_FB}
	\vspace{-2mm}
\end{figure}
Figure \ref{fig:inputs} shows the corresponding input trajectories. For space reasons only the $\mathsf{x}$-direction is plotted. It can be seen that, initially, the ILC input is zero and the consensus feedback component dominates. Whereas after convergence is reached, the feedback input is nearly zero and mainly compensates for non-repetitive errors, while the ILC feedforward input compensates for the large repetitive error. Comparing the converged ILC input with the initial, purely feedback-based input shows that ILC causes larger input magnitudes with peaks being time-shifted to the left. Instead of being reactive, the ILC is proactive and sends aggressive input signals that keep the vehicle on track.

\begin{figure} 	
	\centering 
	\includegraphics[width=0.5\textwidth]{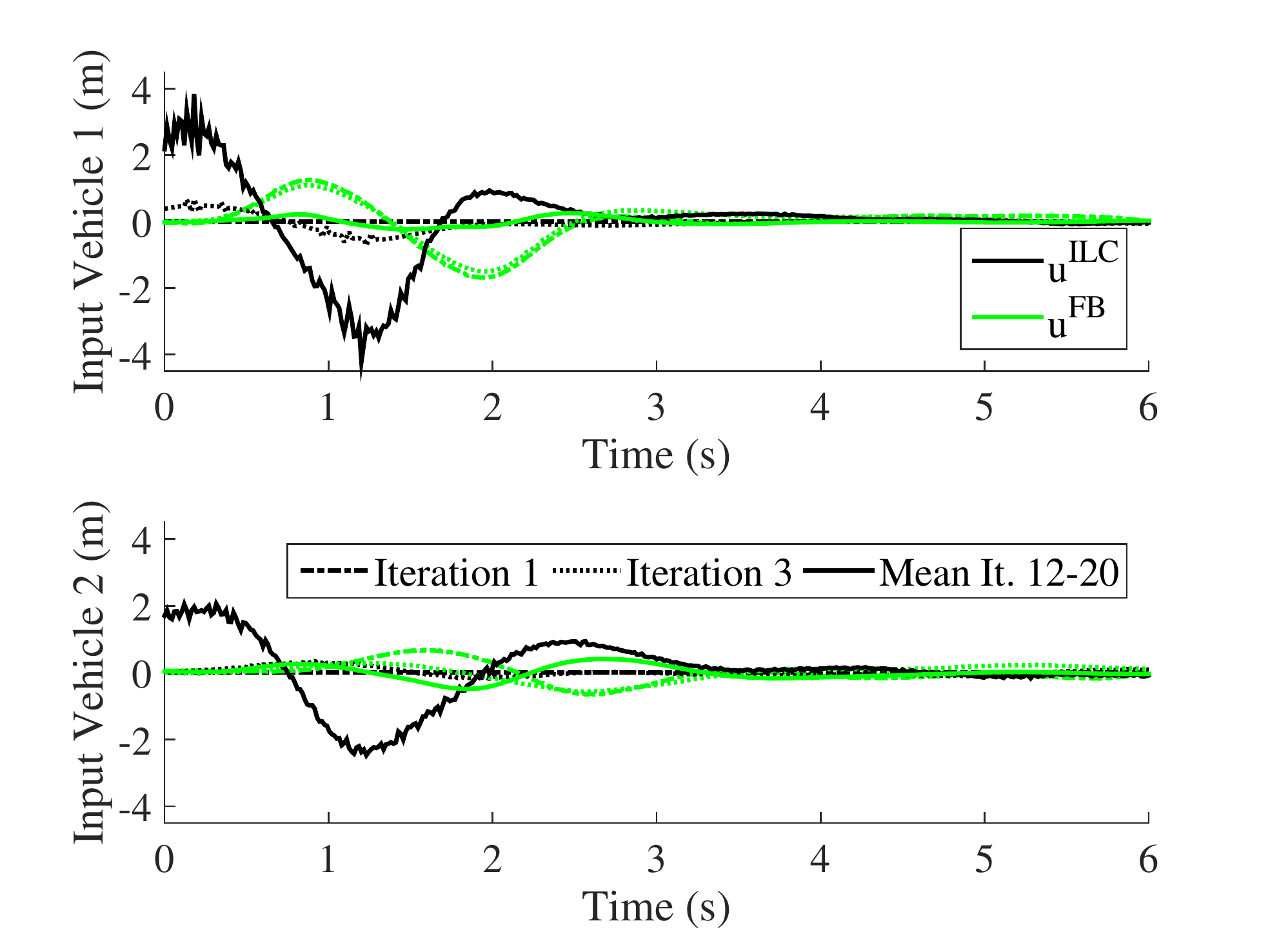}
	\caption{Input trajectories in $\mathsf{x}$-direction for vehicle 1 (top) and vehicle~2 (bottom). The ILC-generated feedforward input (black) increases over iterations to compensate for the learned repetitive disturbances, while the input from the consensus feedback controller (green) decreases and only accounts for non-repetitive disturbances at the end. Highlighted are the first (dash-dotted) and the third (dotted) iteration and the mean over iterations 12-20 (solid).}
	\label{fig:inputs}
	\vspace{-2mm}
\end{figure}
The learning performance can be deduced from the convergence of the errors \eqref{e(t)} over iterations shown in Figure~\ref{fig:errorConvergence_FB}. The error of agent $v_2$ (bottom) is computed relative to agent~$v_1$~(top); that is, it describes the formation error. Let us first consider the case with consensus feedback enabled~(magenta). It can be seen that vehicle $v_1$ learns faster than vehicle $v_2$ due to the direct access to the reference, increasing the relative error in iteration~2. Accordingly, not only disturbances affecting the second agent but also the first one lead to increasing formation errors. This also explains the slightly higher error of agent $v_2$ after convergence is reached. 

\begin{figure} 
	\centering 
	\includegraphics[width=0.5\textwidth]{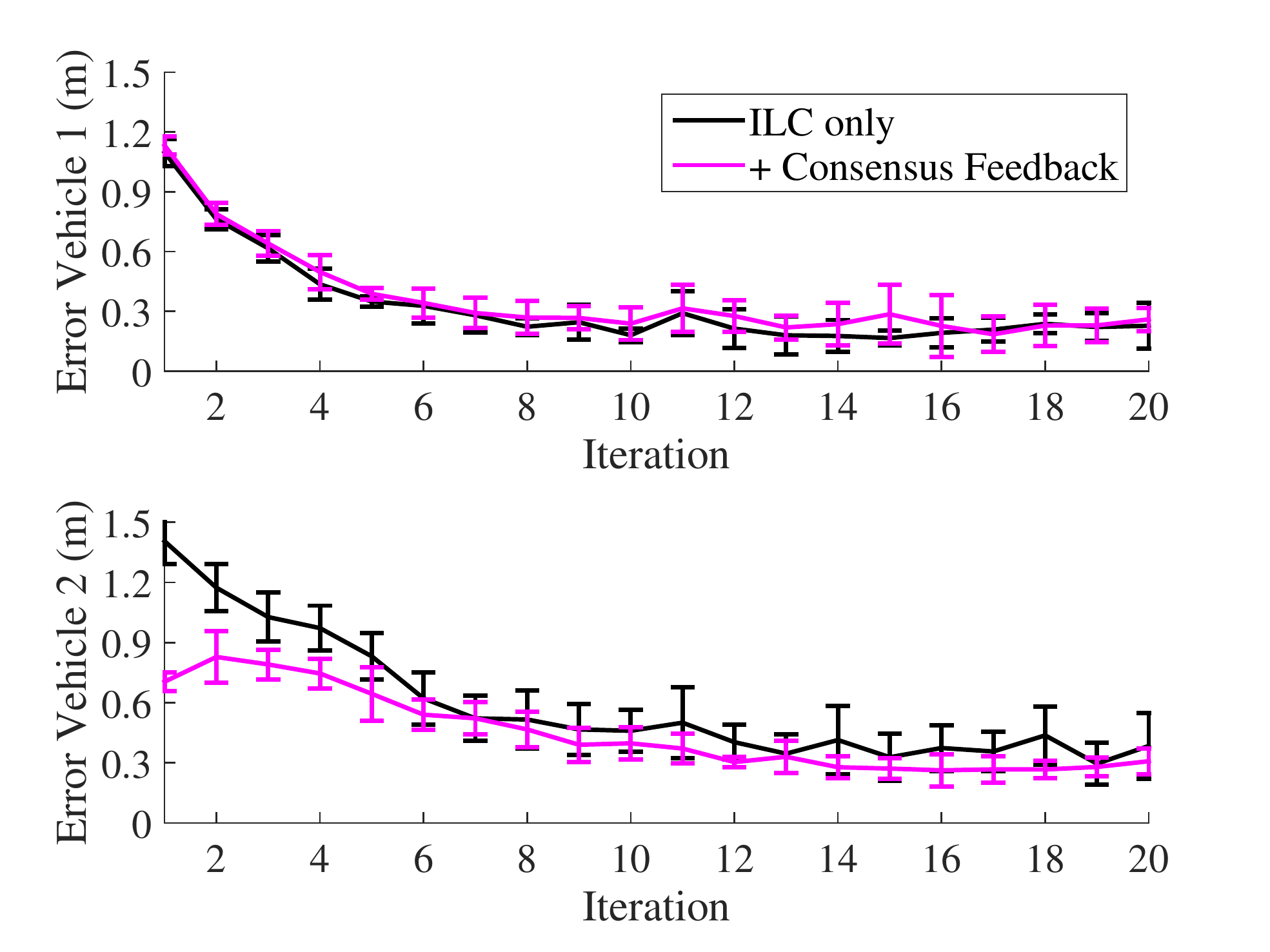}
	\caption{Error convergence plots over iterations for the ILC algorithm with (magenta) and without (black) the additional consensus feedback. The error is computed as $\frac{1}{T}\sum_{T}\left(|e_{i,k}^{\mathsf{x}}(t)|+|e_{i,k}^{\mathsf{y}}(t)|\right)$ with $e_{i,k}(t)$ as in \eqref{e(t)} for $\mathsf{x}$- and $\mathsf{y}$-direction respectively. The solid lines are the mean errors for agent $v_1$ (top) and agent $v_2$ (bottom) over ten experiments each. The bars denote the standard deviations.}
	\label{fig:errorConvergence_FB}
	\vspace{-2mm}
\end{figure}
For comparison, we did the same learning experiments with the consensus feedback controller \eqref{eq:FB_experiment} disabled (black lines). That is, each vehicle's position feedback controller is based only on the vehicle's own tracking error. It can be seen that the values for agent $v_1$ are almost the same with and without the consensus feedback controller. The error of agent $v_2$, which can be interpreted as the formation error, and its standard deviation decrease significantly if the consensus controller is enabled. Since the second vehicle has no reference information and does not follow the leader without the consensus feedback, it does not move at all in the first iteration, which leads to the high initial error. With the consensus feedback controller enabled, the relative error of agent $v_2$ after convergence (that is, after iteration 15) decreases by 24\% and its standard deviation by 53\%. As a result, the proposed distributed feedback has a positive impact on the performance of formation flying as it does both \textit{(i)} reducing the formation error in the first iterations, which can help to avoid collisions, and \textit{(ii)} accounting for non-repetitive, relative disturbances during iterations, which reduces the tracking error and variance after learning convergence.

To show the improvements over the D-type learning function, we compared the experimental results without the feedback controller with a D-type ILC where $k_p = 0$. Without the P-gain, the learning convergence was slower for both agents and, especially for $v_2$, the converged error was much higher (+60\%). For space reasons, plots are not shown here.


\section{CONCLUSIONS}
We developed a distributed ILC algorithm for multi-agent systems, which allows for arbitrary linear and causal learning functions. As a result, we were able to consider a PD-type input update rule extending previous work found in literature that was restricted to learning functions depending only on the error derivative (D-type). Since it can compensate for position offsets, the proposed approach leads to better tracking performance and lower errors. Furthermore, many other learning function options are possible. We derived a simple scalar condition for stability of the proposed learning algorithm in theory. However, to achieve a good learning performance in practice, parameter tuning in simulations and experiment was necessary.

As ILC only compensates for repetitive disturbances, we included a consensus-based feedback controller to increase robustness against non-repetitive disturbances and noise. That this feedback controller does not affect stability of the ILC algorithm was proven theoretically. Moreover, it was shown that the same holds for any dynamic coupling between neighboring agents.
Experimental results showed that the resulting distributed feedback and learning algorithm achieves better reference tracking and lower formation
error, compared to the case without the consensus feedback. As the feedback controller decreases the influence of non-repetitive disturbances, better overall formation tracking performance is achieved.


\bibliographystyle{ieeetr}
\bibliography{MA_AndiHock}

\end{document}